\newtheorem{definition}{Definition}
\newtheorem{theorem}{Theorem}
\newtheorem{lemma}{Lemma}
\newtheorem{remark}{Remark}
\newtheorem{proposition}{Proposition}
\definecolor{mygray}{gray}{.9}
\definecolor{mypink}{rgb}{.99,.91,.95}
\definecolor{mycyan}{cmyk}{.3,0,0,0}
\newcommand\figcaption{\def\@captype{figure}\caption}
\renewcommand{\maketag@@@}[1]{\hbox{\m@th\normalsize\normalfont#1}}%
\begin{document}
\title{\LARGE \bf Speed Planning Using B\'{e}zier Polynomials with Trapezoidal Corridors}
\author{Jialun Li\textsuperscript{1}, Xiaojia Xie\textsuperscript{2}, Hengbo Ma\textsuperscript{3}, 
Xiao Liu\textsuperscript{2} and Jianping He\textsuperscript{1}
\thanks{The authors\textsuperscript{1} are with the Department of Automation, Shanghai Jiao Tong University, and Key Laboratory of System Control and Information Processing, Ministry of Education of China, Shanghai 200240, China. E-mail: \{jialunli, jphe\}@sjtu.edu.cn. The authors\textsuperscript{2} are with Megvii (Face++) Technology Inc., Beijing, China. E-mail: \{xiexiaojia, liuxiao\}@megvii.com. 
The author\textsuperscript{3} is with University of California, Berkeley, CA 94720, USA. E-mail: hengbo\_ma@berkeley.edu.}}
\captionsetup{font={small}}
\maketitle

\begin{abstract}
To generate safe and real-time trajectories for an autonomous vehicle in dynamic environments, path and speed decoupled planning methods are often considered. This paper studies speed planning, which mainly deals with dynamic obstacle avoidance given the planning path. The main challenges lie in the decisions in non-convex space and the trade-off between safety, comfort and efficiency performances. This work uses dynamic programming to search heuristic waypoints on the S-T graph and to construct convex feasible spaces. Further, a piecewise B\'{e}zier polynomials optimization approach with trapezoidal corridors is presented, which theoretically guarantees the safety and optimality of the trajectory. The simulations verify the effectiveness of the proposed approach. 
\end{abstract}
\IEEEpeerreviewmaketitle

\section{Introduction}
Autonomous vehicles are promising to revolutionize transportation systems and change the ways how people travel \cite{schwarting2018planning,paden2016survey}. To interact with other agents on road, the self-driving car adjusts its path and speed over time constantly based on perception information. This task can be formulated as a problem to optimizes a high-dimensional trajectory in terms of comfort and energy saving while satisfying safety and dynamic feasibility constraints. However, solving the original optimization problem is intractable with limited computation time and infeasible for online running. Therefore, it is challenging for the ego vehicle to generate collision-free, real-time and comfortable trajectories to react to other road participants. 

To deal with this issue, there are two major trajectory generation frameworks, spatio-temporal planning  \cite{ding2019safe,ziegler2014trajectory,mercy2016real} and path/speed decoupled planning in Fren\'{e}t frame \cite{gu2016runtime,xu2012real,li2015real}. These two approaches share the same hierarchical ideas to achieve real-time planning by searching heuristic solutions first and optimizing the preliminary results in convex subspaces later. 

The spatio-temporal planning considers the spatial and temporal maneuvers simultaneously. The search and optimization processes are completed in three-dimensional spaces. Correspondingly, the decoupled method decomposes the 3D planning into two stages as path planning and speed planning. In each planning cycle, path planning is executed to generate a path to avoid static, oncoming and low-speed vehicles. Then, speed planning adjusts the vehicle's speed to keep a safe distance from dynamic obstacles which block the formed path. Although, the layered approach is prone to be suboptimal with the appearance of dynamic obstacles compared to the 3D optimization, it is more flexible for the separated design process. Besides, the computational complexity can be reduced by this approach as discussed in \cite{gu2016runtime,xu2012real,li2015real}. Hence, the decoupled planning framework is widely adopted by academia and industry for its flexibility and efficiency. 
\begin{figure}[t]
\begin{center}
\subfigure[]{\label{fig1.a}
\includegraphics[height=4cm]{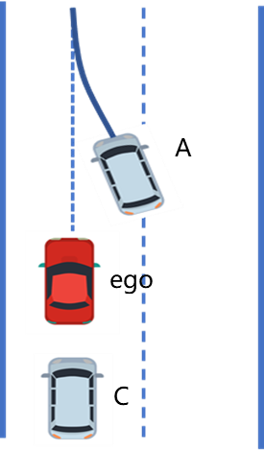}}
\subfigure[]
{\label{fig1.b}
\includegraphics[height=3.6cm]{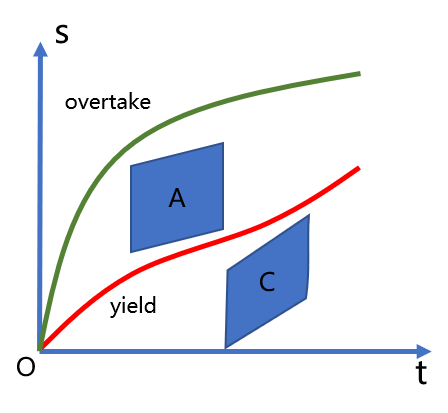}}
\caption{A merge scenario and its S-T graph.}
\label{fig1}
\end{center}
\vspace*{-0.8cm}
\end{figure}

In the hierarchical planning framework, speed planning plays an important role in avoiding other dynamic vehicles. For example, when another vehicle merges into the same lane, the ego vehicle may follow the lane-changing vehicle while maintaining a distance from the car behind, as shown in Fig.\ref{fig1.a}. During this process, the speed of the ego vehicle is expected to slow down first and then accelerate smoothly in a short response time. The speed profiles determine the distances from obstacles, forward and lateral accelerations, which influence the safety and comfort of passengers. 

For the speed planning, \cite{lipp2014minimum} and \cite{zhang2018speed} present a convex optimization method with friction circle as the constraint. Liu \cite{liu2017speed} presents a novel slack convex feasible set algorithm to optimize the time stamp of each waypoint with fixed stations. These approaches ensure the safety and comfort. For better computational efficiency, the first search for heuristic profiles and post optimization paradigm is mostly taken. The search methods include A* \cite{hubmann2016generic}, RRT \cite{du2020speed} and dynamic programming \cite{fan2018baidu}. As for the optimization, there are two main approaches. One is to optimize the stations at discretized time instants directly  \cite{zhang2019optimal,meng2019decoupled}. Another is to parameterize the speed profile with polynomials and convert the speed optimization problem into finding optimal polynomial coefficients which satisfy certain constraints \cite{fan2018baidu}.

In all the above speed optimization methods, the safety constraints are imposed at discretized time instant. The problem lies in that the safety limitations should hold for the whole planning horizon and the safety between two adjacent sampling time stamps cannot be guaranteed. These works tackle this problem by shortening the time interval to impose the constraints. However, this will lead to more decision variables and much more computation time. 

Motivated by the above observations, in this paper, we focus on safety enforcement for the whole speed planning period in dynamic environments with feasibility, efficiency and comfort considerations. We use piecewise B\'{e}zier polynomials to optimize the speed profile on the S-T graph. Specially, we provide the sufficient condition on coefficients of the B\'{e}zier polynomial to enforce the station curve in a serial of trapezoidal corridors\footnote{Corridors imply subspaces of safe regions where the B\'{e}zier curves always lie in (Refer to Section III.C for details). This term is firstly presented in the safe planning of unmanned aerial vehicles (UAVs).} for the first time, by the convex hull property of the B\'{e}zier polynomial. During the search phase, dynamic programming is adopted. The main contributions of our work are summarized as follows. 
\begin{itemize}
\item We use dynamic programming (DP) to search feasible space and construct convex regions from the non-convex ones for online feasibility. The piecewise heuristic polynomials of stations with time stamps are achieved at the same time to avoid minimizing the distance between station curve and discretized heuristic points for the concise objective function and efficiency.
\item We provide the sufficient condition on coefficients of the B\'{e}zier polynomial to guarantee the station curve in trapezoidal corridors theoretically. Compared with existing condition of enforcement in the piecewise rectangular corridors \cite{ding2019safe}, the condition is relaxed and solution space is demonstrably enlarged, which leads to a lower cost function and better comfort. 
\item We conduct simulations to verify the effectiveness of the proposed method. Our method is better than the classical B\'{e}zier polynomial method in terms of the comfort with lower failure rates in highly dynamic environments. The proposed method also outperforms other works on efficiency, e.g., speed planning modules of the public road planner \cite{zhang2019optimal} and the EM motion planner \cite{fan2018baidu}. We release our module as an open-source package.
\end{itemize}

\section{Related Works}
\textbf{Speed Planning} Speed profiles can be generated by means of
(\romannumeral1) searching and optimizing. (\romannumeral2) sampling lattices and selecting by cost. (\romannumeral3) approximated optimization. The first method searches for the best candidate speed profile and optimizes the curve for smoothness. This method is commonly adopted for optimality and efficiency. Xu first presents a method of selecting the best lattice and conducting post optimization \cite{xu2012real}. The Baidu EM motion planner uses dynamic programming for search and piecewise monomial polynomials for optimization \cite{fan2018baidu}. In \cite{zhou2020autonomous}, a trajectory planning algorithm for open spaces is presented. The heuristic speed points are calculated based on the vehicle dynamic and directly optimized with Piecewise-Jerk Speed Optimization. As for the approach (\romannumeral2), different speed lattices are sampled and combined with path lattices. The generated local spatial-temporal trajectories are evaluated and the best one is selected. The core problem is how to construct the lattices to make up the suboptimality of discretization. Related works see \cite{gu2016runtime,li2015real}.
Besides, there are some works directly optimizing the speed profile. Liu presents a novel slack convex feasible set algorithm \cite{liu2017speed} and Qian proposes a MPC method \cite{qian2016motion}. 

\textbf{B\'{e}zier Polynomials Based Planning}
B\'{e}zier polynomial is the linear combination of Bernstein basis and can be obtained by linear transformation from polynomial with monomial basis. B\'{e}zier polynomial has useful properties for the path generation and speed planning problem. In \cite{gonzalez2016speed}, a smooth and continuous speed profile is computed by proper curve concatenation without optimization and considering dynamic obstacles. In the trajectory generation of UAVs, the B\'{e}zier polynomial is widely taken with rectangular corridors in 2D space to describe safe areas from obstacles \cite{gao2018online}, \cite{zhou2019robust}. As for autonomous driving, Ding borrows this idea and proposes a way of generating sequential rectangular corridors where B\'{e}zier polynomials for the speed curve are optimized \cite{ding2019safe}.

Corridors based B\'{e}zier polynomial optimization is prospective to enhance safety, comfort and efficiency in speed planning. 
However, the shapes of safe regions on S-T graph are quite different from those for UAVs to generate trajectories. Then, the shapes and generations of  corridors of two scenes are distinct from each other. This will lead to different limitations on coefficients and optimization performances. For the S-T graph, the boundaries of obstacles are straight lines or parabolic curves, since the accelerations of obstacles are usually assumed to be constant in the planning horizon. In the UAV navigation scenarios, the obstacles are  circle, rectangles and polygons where rectangular corridors are easy to be generated by cube inflations. Hence, the following parts aim to solve the gap between profile generation on S-T graph and existing B\'{e}zier polynomial optimization methods. 
\section{S-T Graph and Trajectory Formulation}
\subsection{S-T Graph and Non-convex Optimizations}
As mentioned above, speed planning is to append a speed component to the path. In this process, the longitudinal and lateral comfort, physical and traffic constraints and safety constraints should be considered. The S-T graph is an approach to analyze this issue intuitively (Fig.\ref{fig1.b}). 

Before executing the motion planning module, the ego vehicle predicts trajectories of surrounding vehicles. If its path is blocked by other vehicles in the planning horizon, the stations of these obstacles over time will be mapped to the S-T graph. We take the common case that they move with constant speeds for simplicity. Then, the area denoting each obstacle on the S-T graph is 
a parallelogram. The side length of parallelogram along s axis equals to the length of the vehicle plus half length of ego vehicle for convenience of safety regions representation. 


A speed profile of the ego vehicle on the S-T graph reflects its distances from others with time and its decisions on how to avoid these obstacles, such as yielding, overtaking, following and so on. The curves below blocked areas mean to yield the car in front, while those above mean to overtake or keep distance from the car behind. To keep safe, the feasible space of the speed curve should not overlap with the regions projected by obstacles.

As shown in Fig.\ref{fig2.a}, the constraints on stations are imposed at discrete time stamps for safety. In some time slots, the allowed stations are split into two disconnected intervals by obstacles. When the station-time curve is optimized with these segmented safety constraints, the problem is non-convex and hard to be solved in milliseconds for online running. \vspace{-0.4cm}
\subsection{B\'{e}zier Polynomial and Its Properties}
The B\'{e}zier polynomial is the polynomial function represented by linear combinations of Bernstein basis. The $n$-th ordered B\'{e}zier polynomial is written as
\vspace{-0.2cm}
\begin{equation}\label{Bezier_func}
\begin{aligned}
B(t)= c_0b_{n}^{0}(t)+c_1b_{n}^{1}(t)+\dots+c_{n}b_{n}^{n}(t)=\sum_{i=0}^{n}c_ib_{n}^{i}(t),
\end{aligned}
\end{equation}
where the Bernstein basis satisfies $b_{n}^{i}(t)=C_n^{i} \cdot t^{i} \cdot (1-t)^{n-i},t\in[0,1]$. The coefficients of the polynomial $c_i\ (i = 0,1,\dots,n)$ are also called control points. Compared to the monomial polynomial, B\'{e}zier curve has following properties. 

\noindent i) The time interval is defined on $t\in [0,1]$. 

\noindent ii) The B\'{e}zier polynomial starts at control point $B(0)=c_0$ and ends at $B(1)=c_n$. 

\noindent iii) The B\'{e}zier curve $B(t)$ is confined within the convex hull of control points. It can be described by the proposition \ref{pro1}. 
\begin{proposition}\label{pro1}
For B\'{e}zier polynomial $B(t)=\sum_{i=0}^{n}c_ib_{n}^{i}(t)$, if the coefficients satisfy
$\underline {p_0}\leq c_i \leq \overline {p_0},\ i =0,1,\dots,n$, 
we have $\underline {p_0} \leq B(t) \leq \overline {p_0},t\in[0,1]$.
\end{proposition}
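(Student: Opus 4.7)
The plan is to show that on $[0,1]$ the Bernstein basis $\{b_n^i\}_{i=0}^n$ forms a partition of unity consisting of nonnegative functions, so that $B(t)$ is always a convex combination of the control points $c_0,\dots,c_n$ and hence lies in their convex hull, which in one dimension is simply the interval spanned by the smallest and largest $c_i$.

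First I would verify the two elementary properties of the Bernstein basis. Nonnegativity on the unit interval is immediate from the definition $b_n^i(t)=\binom{n}{i}t^i(1-t)^{n-i}$, since each factor is nonnegative when $t\in[0,1]$. The partition of unity
\begin{equation*}
\sum_{i=0}^{n} b_n^i(t) \;=\; \sum_{i=0}^{n}\binom{n}{i}t^i(1-t)^{n-i} \;=\; \bigl(t+(1-t)\bigr)^{n} \;=\; 1
\end{equation*}
follows at once from the binomial theorem. Together these two facts mean that for each fixed $t\in[0,1]$ the coefficients $b_n^0(t),\dots,b_n^n(t)$ are nonnegative weights summing to $1$.

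Next I would use this to bound $B(t)$. From the assumption $\underline{p_0}\le c_i\le \overline{p_0}$ for all $i$, multiplying by the nonnegative weight $b_n^i(t)$ and summing gives
\begin{equation*}
\underline{p_0}\sum_{i=0}^{n} b_n^i(t) \;\le\; \sum_{i=0}^{n} c_i\, b_n^i(t) \;\le\; \overline{p_0}\sum_{i=0}^{n} b_n^i(t),
\end{equation*}
and the partition-of-unity identity collapses the outer sums to $\underline{p_0}$ and $\overline{p_0}$, yielding $\underline{p_0}\le B(t)\le \overline{p_0}$ for every $t\in[0,1]$.

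This is essentially a textbook argument, so I do not expect any genuine obstacle. The only point that requires a moment of care is invoking the binomial identity in the right form so that it lines up with the normalization in the Bernstein basis; once that is in place the bound is a one-line convex-combination estimate. The same argument also shows, as a byproduct, that the tighter bounds $\min_i c_i \le B(t)\le \max_i c_i$ hold, which is the precise statement of the convex-hull property that Proposition~\ref{pro1} specializes.
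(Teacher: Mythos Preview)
Your proof is correct and follows essentially the same approach as the paper: both exploit the nonnegativity of the Bernstein basis on $[0,1]$ together with the binomial identity $\sum_{i=0}^{n}b_n^i(t)=(t+(1-t))^n=1$ to sandwich $B(t)$ between $\underline{p_0}$ and $\overline{p_0}$. The paper's proof is simply a terser version of yours.
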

This is widely used in the B\'{e}zier curve optimization with rectangular corridor. 
We give a brief proof in the appendix. \\
\noindent iv) The derivative of $B(t)$, $\dot{B}(t)$, can also be written as a B\'{e}zier polynomial with control points $c_{i}^{1}=n\cdot(c_{i+1}-c_{i})$,\;$i=0,1,\dots,n-1$. By this way, we are also able to calculate arbitrary derivatives of $B(t)$. Similarly, control points of $\frac{\mathrm{d}^{l+1}B(t)}{\mathrm{d}t^{l+1}}$ 
and $\frac{\mathrm{d}^{l}B(t)}{\mathrm{d}t^{l}}$ satisfy $c_{i}^{l+1}=(n-l)(c_{i+1}^{l}-c_{i}^{l})$.
\subsection{Speed Curve Formulation by Piecewise B\'{e}zier Polynomials}
Since the station-time points obtained by heuristic searching are lack of comfort and smoothness considerations, it requires a polynomial approach for optimization subsequently. Instead of using traditional polynomials with monomial basis, we choose B\'{e}zier polynomials to ensure the speed profile in the safe region theoretically. The safety property guaranteed by B\'{e}zier function is that constraints on control points can limit the value of polynomials during the whole time period. This uses the convex hull property (iii) above and proposition \ref{pro1}. 

The piecewise polynomials are adopted to replace higher order polynomial. This will guarantee fitting performance while avoiding numerical instability. Since $B(t)$ is defined on a fixed time interval $[0,1]$, for a piece of trajectory during $[T_k,T_{k+1}]$, we use translation and scaling methods same as \cite{gao2018online} to transform the time domain into correct time unit.
Then the station with time can be represented as 
\begin{equation}
\centering
s(t)=\left\{
\begin{array}{c}
   h_{0}B_0\left(\frac{t-T_0}{h_0}\right),t\in[0,T_1]\\  
   h_{1}B_1\left(\frac{t-T_1}{h_1}\right),t\in[T_1,T_2]\\
          \vdots  \\
   h_{m}B_m\left(\frac{t-T_m}{h_m}\right),t\in[T_m,T_{m+1}]. \\
\end{array}
\right.
\end{equation}
\section{Heuristic Searching and corridor generation}
\subsection{Dynamic Programming}
Since the speed optimization problem is non-convex, it is infeasible to be solved directly during online planning. Hence, we take the search method for heuristic profiles and post convex optimization to generate a smooth speed curve. We first discretize the S-T graph into grids. The grids have the same time interval as $\Delta t_1$. Then we use dynamic programming to search on the nodes, vertices of grids, to obtain the heuristic stations with discrete time.

In dynamic programming algorithm, the states are the nodes of S-T graph as $(t_i,s_j)$. Let the cost function $\mathrm{cost}(t_i,s_j)$ denote the total cost from start state $(0,s_0)$ to $(t_i,s_j)$.
This cost can be achieved by the sum of cost at consecutive single node selected and the cost of state transitions $\mathrm{cost}_{edge}((t_{i-1},s_k),(t_i,s_j))$. The iterative equation between parent node $(t_{i-1},s_k)$ and child node $(t_i,s_j)$ is given by 
\begin{equation}
\begin{aligned}
\mathrm{cost}(t_i,s_j)&=\min_{t_i,s_j}\{\mathrm{cost}(t_i,s_j),\mathrm{cost}(t_{i-1},s_k)\\&+\mathrm{cost}_{node}(t_i,s_j)
+\mathrm{cost}_{edge}((t_{i-1},s_k),(t_i,s_j))\}.
\end{aligned}
\end{equation}
The term $\mathrm{cost}_{node}(t_{i},s_j)$ is the cost at single node $(t_i,s_j)$, determined by 
distances from obstacles and expected terminal station at $T$. 
The transitions between states $(t_{i-1},s_k)$ and $(t_i,s_j)$ satisfy limited velocity and acceleration constraints. After calculations of costs by iterations, the optimal end point is chosen as $(t_i,s_j)$ with $t_i=T$ and lowest cost. Then, the heuristic points $(t^r_j,s^r_j),\;j=0,1,\dots,j_{m+1}$ are obtained by visiting the parent of each node backward from the end to the start. The piecewise heuristic station curves are represented by linear interpolation as 
\begin{equation}
\begin{aligned}
s^r_j(t)=\frac{s_{j+1}^r-s_{j}^r}{\Delta t_1}(t-t_j)+s_j^r,\;j=0,1,\dots,j_{m+1}.
\end{aligned}
\end{equation}

These heuristic stations by DP not only provide a reference and warm start for the next optimization process, but also imply decisions concerning the obstacles. Further, we can follow their physical meanings and discard another region induced by the obstacle as shown in Fig.\ref{fig2.b}. Then, the constraint on station at each time instant is a continuous interval. As a result, the non-convex space can be converted into piecewise convex ones for next optimization. 
\begin{figure}[t]
\begin{center}
\subfigure[Nonconvex safe regions.]{\label{fig2.a}
\includegraphics[width=4cm,height=2.6cm]{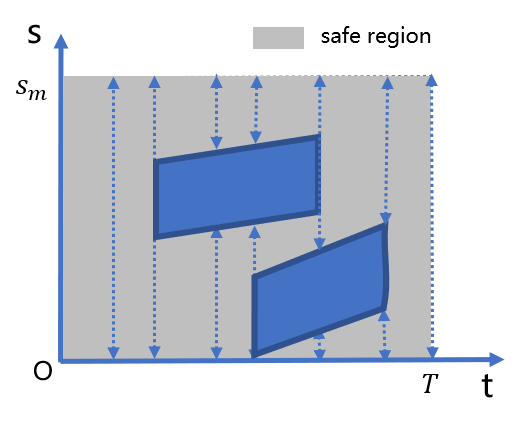}}
\subfigure[DP search and piecewise convex safe regions.]{\label{fig2.b}
\includegraphics[width=4cm,height=2.6cm]{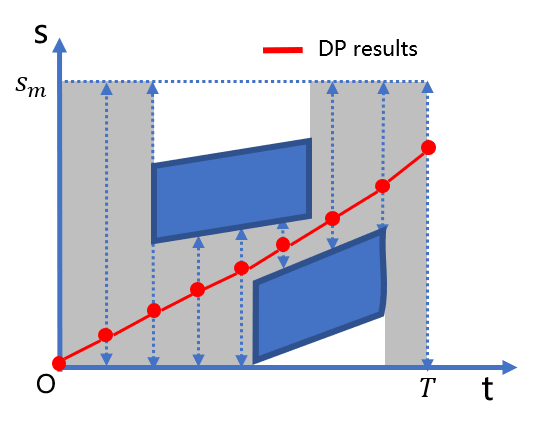}}
\subfigure[Representations of safe regions.]{\label{fig2.c}
\includegraphics[width=4cm,height=2.6cm]{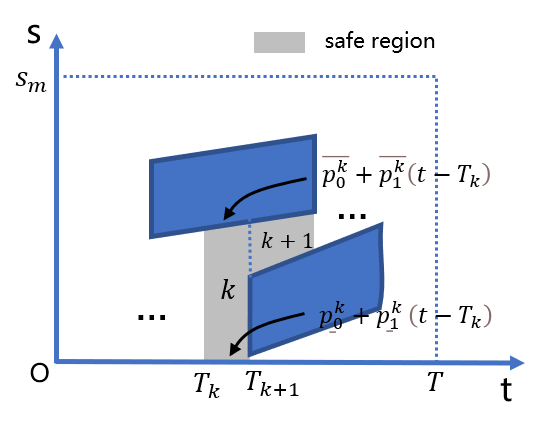}}
\subfigure[Generations of safe regions.]{\label{fig2.d}
\includegraphics[width=4cm,height=2.6cm]{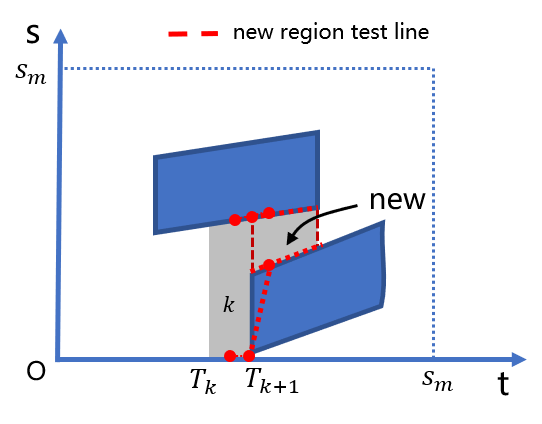}}
\caption{Safety Regions and their Generations on S-T graph.}
\label{fig2}
\end{center}
\vspace*{-0.8cm}
\end{figure}
\subsection{Piecewise Convex Safe Regions Representations}
Suppose the whole safe region induced by DP are divide into $m+1$ pieces with time intervals $[T_0,T_1],\dots,[T_m,T]$ and $T=T_{m+1}$. As shown in Fig.\ref{fig2.c}, the $k$-th convex safe regions can be represented as
\begin{equation}
\begin{aligned}
\mathcal{S}_k=\{(t_i,s_i)|\underline{p_0^{k}}+h_k\underline {p_1^{k}}\frac{t_i-T_k}{h_k}\leq s_i \leq \overline {p_0^{k}}+h_k\overline {p_1^{k}}\frac{t_i-T_k}{h_k},\\t_i\in[T_k,T_{k+1}]\},
\end{aligned}
\end{equation}
where $\underline{p_0^{k}},\underline{p_1^{k}}$ are bias and skew of the lower bound and $\overline{p_0^{k}},\overline{p_1^{k}}$ are those of the upper bound. $h_k$ denotes the length of $k$-th time interval and satisfies $h_k = T_{k+1}-T_k$, $k=0,1,\dots,m$. 

Then, the whole safe region is 
$$\mathcal{S}=\mathcal{S}_0\cup\dots\cup\mathcal{S}_m.$$ The speed planning is safe if 
$\ \forall t_0 \in [0,T],s(t_0)\in \mathcal{S}$, which is equivalent to 
for $t_0\in[T_k,T_{k+1}]$, $s(t_0)\in\mathcal{S}_k$, $k=0,1,\dots,m$, i.e.
\begin{equation}\label{safety1}
\underline{p_0^{k}}+h_k\underline {p_1^{k}}\frac{t_0-T_k}{h_k}\leq s(t_0) \leq \overline {p_0^{k}}+h_k\overline {p_1^{k}}\frac{t_0-T_k}{h_k}.
\end{equation}
\subsection{Piecewise Convex Safe Regions Generations}
As discussed above, the constraint \eqref{safety1} is expected to be followed for safety in whole planning horizon. However, the bounds on stations are stored as lower bounds $lb$ and upper bounds $ub$ at discrete time stamps after DP processing. The values of $\underline{p_0^{k}}$, $\underline{p_1^{k}}$, $\overline{p_0^{k}}$, $\overline{p_1^{k}}$ and $h_k$ in \eqref{safety1} cannot be achieved directly. Hence, the algorithm \ref{algo-1} is presented to obtain these values and to generate sequential convex safe regions.

Fig.\ref{fig2.d} illustrates the generation of piecewise safe convex regions by algorithm \ref{algo-1}. In the algorithm, the heuristic positions are first sampled with the equal time interval $\Delta t_2$ as a serial of boundary points with $\Delta t_2=\frac{\Delta t_1}{N}$, 
$N\in \mathbb{N}_+$.
Then, three adjacent points in upper and lower boundaries are detected respectively to see whether they are on the same line. If they are not collinear, a new region will be created with different skews. After the generations of regions, the function RegionSplit will check the length of each corridor. If there is some region's length above $1s$, the region will be split into multiple regions with time intervals not exceeding $1s$. This operation aims to avoid underfitting of polynomial optimization and guarantee optimization performance. Accordingly, the function RegionMerge aims to merge the regions with small time intervals into adjacent corridors to avoid overfitting and speed up the optimization process.

It should be noted that the creations of regions are dependent on the descriptions of predicted interacting vehicles. For simplicity, we assume that the obstacles are traveling at constant velocities. Hence, if the boundaries of obstacles are not linear, they should be processed to linear bounds first.
\begin{algorithm}[t]
 \small
    \caption{Piecewise Convex Regions Generation}
    \label{algo-1}
    \begin{algorithmic}[1]
    \REQUIRE~~{$\ \  lb,ub,nums,\Delta t_2$}
    \ENSURE~~{$regions$}
    \STATE \textbf{Initialize}: $regions[0], i=0, j = 1$\
    \STATE ConstructRegion($reg,0,lb[0],ub[0],lb[1],ub[1]$)
    \STATE Insert($regions,reg$)
    \FOR {$i\gets 2$\ \textbf{to} $nums-2$}
    { \STATE $lskew = (lb[i]-lb[i-1])/\Delta t_2$
      \STATE $uskew = (ub[i]-ub[i-1])/\Delta t_2$
      \IF {$\|lskew-regions[j-1].lskew\|>\varepsilon$ or 
    $\|uskew-regions[j-1].uskew\|>\varepsilon$}
    { 
    	\STATE $regions[j - 1].t_{end} = i$
    	\STATE $regions[j - 1].t = (regions[j - 1].t_{end}-regions[j - 1].t_{beg})*\Delta t_2$
    	\STATE ConstructRegion($reg,j,lb[i],ub[i],lb[i+1],ub[i+1]$)
    	\STATE Insert($regions,reg$)
    	\STATE $j\gets j+1 $
    }
    \ENDIF
    }
    \ENDFOR
    \STATE $regions[j-1].t_{end}=n-1$
    \STATE $regions[j - 1].t = (regions[j - 1].t_{end}-regions[j - 1].t_{beg})*\Delta t_2$
    \STATE RegionSplit($regions$)
    \STATE RegionMerge($regions$)
    \STATE \textbf{Return} $regions$
    \end{algorithmic}
\end{algorithm}
\begin{algorithm}[t]
 \small
    \caption{ConstructRegion($reg,j,lb[i],ub[i],lb[i+1],ub[i+1]$)}
    \label{algo-2}
    \begin{algorithmic}[1]
    \REQUIRE~~{$\ \  reg,j,lb[i],ub[i]$}
    \ENSURE~~{$corridors$}
    \STATE $reg.t_{beg} = i$
    \STATE $reg.lskew = (lb[i+1]-lb[i])/\Delta t_2$
    \STATE $reg.lbias = lb[i]$
    \STATE $reg.uskew = (ub[i+1]-ub[i])/\Delta t_2$
    \STATE $reg.ubias = ub[i]$
    \STATE Insert($regions,reg$)
    \end{algorithmic}
\end{algorithm}

\section{Piecewise B\'{e}zier Polynomial Optimization}
\subsection{Problem of Safety Enforcement in Rectangular Corridors}
To enforce the station curve in the safe region $\mathcal{S}$, the convex hull property of the B\'{e}zier function is used and the definition of corridor is given first.
\begin{definition}
Let coefficients of the B\'{e}zier Polynomial satisfy 
$c_i\in\Omega,i=0,1,\dots,n$ and then the Bezier curve $B(t)$ resides in a set $\mathcal{S}^{cor}$ for the convex hull property. If $\mathcal{S}^{cor}$ is the subset of safe set $\mathcal{S}$, i.e. $\mathcal{S}^{cor} \subseteq \mathcal{S}$, $\mathcal{S}^{cor}$ is called a corridor.
\end{definition}
With the concept of corridor, the safety can be checked by calculation of $\mathcal{S}^{cor}$ by $c_i\in\Omega$ and contrast with $\mathcal{S}$. The shape of corridor is defined as that of region covered by $\mathcal{S}^{cor}$. 
Usually, a serial of rectangular corridors are chosen to gurantee the safety of trajectory parameterized by coefficients of B\'{e}zier polynomials. 
Constraints on the coefficients are described by the following proposition.
\begin{proposition}\label{propo2}
For arbitrary pieces of the trajectory, if there exists control points 
$c_i^{k}\in$ $\Omega_1^{k}=\{c^k|\underline {p^{k}_0}+h_k\underline {p^{k}_1} \leq c^k \leq \overline {p^{k}_0}, i=0,1,\dots,n, k=0,1,\dots,m\}$, $s(t)$ is safe on the S-T graph and $\mathcal{S}^{cor}$ is a rectangular corridor 
$\mathcal{S}^{rec}$. 
\end{proposition}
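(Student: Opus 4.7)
The plan is to apply the convex hull property of Proposition~\ref{pro1} separately to each piece $B_k$ of the piecewise B\'{e}zier polynomial and then glue the pieces back together. Proposition~\ref{pro1} says the graph of a B\'{e}zier polynomial on $\tau\in[0,1]$ is squeezed between the smallest and largest of its control points; after the affine time rescaling $\tau=(t-T_k)/h_k$ this becomes a bound on $s(t)$ over $[T_k,T_{k+1}]$, so box constraints on control points automatically carve out an axis-aligned rectangle in the S-T plane---exactly the ``rectangular corridor'' $\mathcal{S}^{rec}$ of the statement.

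The first step is to identify the largest such rectangle that still fits inside the trapezoidal region $\mathcal{S}_k$. Its lower edge is the line $s=\underline{p_0^{k}}+\underline{p_1^{k}}(t-T_k)$ and its upper edge is the line $s=\overline{p_0^{k}}+\overline{p_1^{k}}(t-T_k)$. Under the S-T graph orientation used throughout the paper both slopes are non-negative, so the lower line attains its maximum $\underline{p_0^{k}}+h_k\underline{p_1^{k}}$ at $t=T_{k+1}$ and the upper line attains its minimum $\overline{p_0^{k}}$ at $t=T_k$. The largest inscribed rectangle is therefore $\mathcal{S}^{rec}_k=[T_k,T_{k+1}]\times\bigl[\underline{p_0^{k}}+h_k\underline{p_1^{k}},\,\overline{p_0^{k}}\bigr]$. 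The second step is to feed the hypothesis $c_i^{k}\in\Omega_1^{k}$ directly into Proposition~\ref{pro1} with $\underline{p_0}=\underline{p_0^{k}}+h_k\underline{p_1^{k}}$ and $\overline{p_0}=\overline{p_0^{k}}$, which yields $B_k(\tau)\in[\underline{p_0^{k}}+h_k\underline{p_1^{k}},\,\overline{p_0^{k}}]$ for every $\tau\in[0,1]$. Undoing the rescaling, $(t,s(t))\in\mathcal{S}^{rec}_k\subseteq\mathcal{S}_k$ for every $t\in[T_k,T_{k+1}]$. Taking the union over $k=0,\ldots,m$ then gives $\mathcal{S}^{cor}=\bigcup_{k}\mathcal{S}^{rec}_k\subseteq\mathcal{S}$, which establishes both that the corridor is rectangular and that $s(t)$ is safe throughout the entire planning horizon.

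The main obstacle is really the sign bookkeeping for $\underline{p_1^{k}}$ and $\overline{p_1^{k}}$. The tightness of the constants $\underline{p_0^{k}}+h_k\underline{p_1^{k}}$ and $\overline{p_0^{k}}$ as the extrema of the boundary lines over $[T_k,T_{k+1}]$ relies on those slopes being non-negative; if either were negative, the corner attaining the extremum would swap and the definition of $\Omega_1^{k}$ would have to be restated with an explicit $\max$ and $\min$ over the two endpoints. Once that convention is fixed, the rest is routine: the time rescaling is an affine increasing bijection, so the coefficient-level bound on $B_k(\tau)$ transfers verbatim to a bound on $s(t)$, and the disjoint-interval union finishes the argument.
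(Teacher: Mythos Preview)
Your proposal is correct and follows essentially the same route as the paper: apply the convex-hull bound of Proposition~\ref{pro1} piecewise with $\underline{p_0}=\underline{p_0^{k}}+h_k\underline{p_1^{k}}$ and $\overline{p_0}=\overline{p_0^{k}}$, observe that the resulting axis-aligned box $\mathcal{S}_k^{rec}$ sits inside the trapezoid $\mathcal{S}_k$, and take the union over $k$. Your added discussion of why the rectangle endpoints are $\underline{p_0^{k}}+h_k\underline{p_1^{k}}$ and $\overline{p_0^{k}}$ (via the non-negative slope convention) is more explicit than the paper's, but it is elaboration rather than a different argument.
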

\begin{proof}
With conditions $c_i^{k}\in$ $\Omega_1^{k}$ and property of B\'{e}zier function, for $\forall t_0\in[T_k,T_{k+1}]$, it follows that 
\begin{equation}
\begin{aligned}
s(t_0) \in \mathcal{S}_k^{rec}=
\{(t_i,s_i)|\underline{p_0^{k}}+h_k\underline {p_1^{k}}\leq s_i \leq \overline {p_0^{k}},\\t_i\in[T_k,T_{k+1}]\}\subseteq \mathcal{S}_k.
\end{aligned}
\end{equation}
Further, we have $\mathcal{S}^{rec}\subseteq \mathcal{S}$. 
\end{proof}
\vspace{-0.18cm}
The problem of using rectangular corridors on S-T graph is that when the bounds of corridors meet $\underline{p_0^{k}}+h_k\underline {p_1^{k}}> \overline {p_0^{k}}$, there is no feasible solution of the optimization problem and the planner will fail. In order to avoid this case, the time intervals of $k$-th corridors should satisfy, $h_k\leq\frac{\overline{p_0^k}-\underline{p_0^k}}{\underline{p_1^k}}$.

Accordingly, \cite{ding2019safe} proposes a seed generation and cube inflation method to adjust time intervals with proper values cleverly. However, this will generate too many corridors and optimized variables in complex environments, which leads to a loss of computation efficiency. Besides, if $\exists$ $\underline{p_1^{k}}>0$ or $\overline{p_1^k}>0$, we have $\mathcal{S}^{rec}\subsetneqq \mathcal{S}$.
The safe regions are not fully utilized for optimization of the station curve.
This reflects that the constraints on control points are too tight to reduce the solution space and degrade the optimality of designed stations. 

Next, we will explore safety constraints on $c_i$ in trapezoidal corridors to fully used safe regions without loss of optimality. 
\begin{figure}[t]
\begin{center}
\subfigure[Optimization in piecewise rectangular corridors (failure).]{\label{fig20.b}
\includegraphics[width=4cm,height=2.6cm]{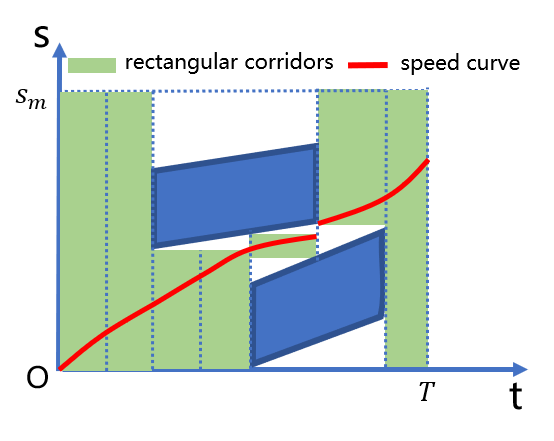}}
\subfigure[Optimization in piecewise rectangular corridors (success).]{\label{fig20.c}
\includegraphics[width=4cm,height=2.6cm]{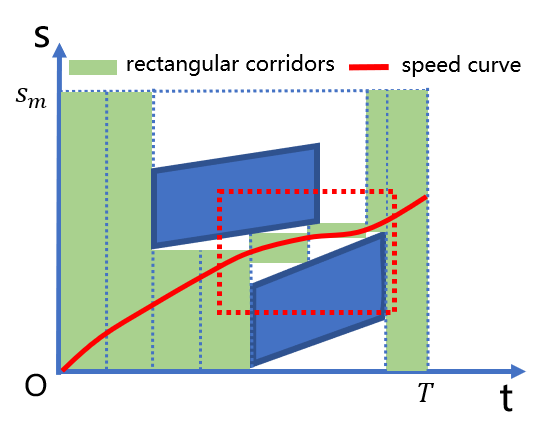}}
\subfigure[Suboptimal of optimization in rectangular corridors.]{\label{fig20.d}
\includegraphics[width=4cm,height=2.6cm]{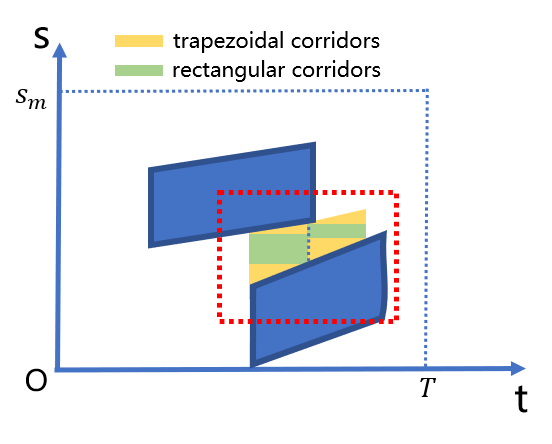}}
\subfigure[Optimization in piecewise trapezoidal corridors.]{\label{fig20.a}
\includegraphics[width=4cm,height=2.6cm]{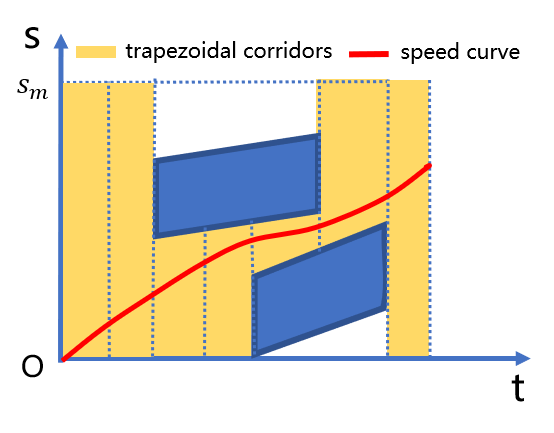}}
\caption{Optimization in trapezoidal and rectangular corridors.}
\label{fig20}
\end{center}
\vspace*{-0.7cm}
\end{figure}
\vspace{-0.7cm}
\subsection{Safety Enforcement in Trapezoidal Corridors}
Before moving to the sufficient conditions on control points $c_i$, we first give the following lemma.
\begin{lemma}\label{lem1}
Let $M\in\mathbb{R}^{(n+1)\times(n+1)}$ denote the transition matrix from the Bernstein basis $\{b_{n}^{0}(t),b_{n}^{1}(t),\dots,b_{n}^{n}(t)\}$ to the monomial basis 
$\{1,t,t^{2},\dots,t^{n}\}$. We have 
$M_{i,0}=1, \; 0 \leq M_{i,j}\leq 1, \; i=0,1,\dots,n, \; j=0,1,\dots,n$.
\end{lemma}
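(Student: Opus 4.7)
The plan is to exhibit the entries of $M$ in closed form and then read the three claimed bounds off that formula directly. The first step is to translate ``$M$ is the transition matrix from the Bernstein basis to the monomial basis'' into the concrete identities
\begin{equation}
t^{j} \;=\; \sum_{i=0}^{n} M_{i,j}\,b_n^i(t),\qquad j=0,1,\dots,n,
\end{equation}
so that column $j$ of $M$ collects the Bernstein coefficients of the monomial $t^j$. This is the convention consistent with the claim $M_{i,0}=1$: for $j=0$ the identity reduces to $1=\sum_i M_{i,0}\,b_n^i(t)$, which together with the partition-of-unity property $\sum_{i=0}^{n} b_n^i(t)=1$ (already implicit in the convex-hull Proposition~\ref{pro1}) forces $M_{i,0}=1$ for every $i$ by uniqueness of coordinates in a basis.

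The main computational step is to establish the closed form
\begin{equation}
M_{i,j} \;=\; \frac{\binom{i}{j}}{\binom{n}{j}},\qquad i,j=0,1,\dots,n,
\end{equation}
with the convention $\binom{i}{j}=0$ for $i<j$. To prove it I would substitute this guess into the right-hand side of the previous display, apply the absorption identity $\binom{n}{i}\binom{i}{j}=\binom{n}{j}\binom{n-j}{i-j}$, and reindex the sum by $k=i-j$; the remaining factor collapses via the binomial theorem into $(t+(1-t))^{n-j}=1$, leaving $t^j$ on the right. Uniqueness of the expansion in the Bernstein basis then yields the formula. This is the only real calculation, and it is short and mechanical.

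With the closed form in hand, the remaining two bounds are one-line consequences: $M_{i,j}\ge 0$ because both binomial coefficients are non-negative, and $M_{i,j}\le 1$ because $\binom{i}{j}\le \binom{n}{j}$ whenever $i\le n$, by monotonicity of $\binom{\cdot}{j}$ in its upper argument (trivially true for $i<j$ since then $\binom{i}{j}=0$). I anticipate no real obstacle here; the only non-routine part is recognizing or guessing the explicit formula for $M_{i,j}$, after which all three properties drop out immediately. If a more conceptual route is preferred, one can instead note that $\{b_n^i(t)\}_{i=0}^{n}$ is the probability mass function of a Binomial$(n,t)$ variable $X$, so that the identity reads $t^j=\mathbb{E}[\binom{X}{j}/\binom{n}{j}]$ and the bounds $0\le M_{i,j}\le 1$ are transparent; however, the direct algebraic derivation sketched above is the most compact.
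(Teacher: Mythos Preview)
Your proposal is correct and takes essentially the same approach as the paper: both expand $t^{j}$ in the Bernstein basis via the identity $t^{j}=t^{j}(t+(1-t))^{n-j}$ and the binomial theorem, arriving at the closed form $M_{i,j}=\binom{i}{j}/\binom{n}{j}$ (the paper writes the equivalent expression $M_{n-j,i}=C_{n-i}^{j}/C_{n}^{j}$), from which $M_{i,0}=1$ and $0\le M_{i,j}\le 1$ follow immediately. The only cosmetic difference is that you verify a guessed formula while the paper derives it directly; the underlying computation is identical.
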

\begin{proof}
It follows that 
\begin{align*}
t^{i}=t^{i}(t+1-t)^{n-i}=&\sum_{j=0}^{n-i}C_{n-i}^{j}t^{n-j}(1-t)^{j}\\
=&\sum_{j=0}^{n-i}\frac{C_{n-i}^{j}}{C_{n}^{n-j}}C_{n}^{n-j}t^{n-j}(1-t)^{j}.
\end{align*}
Hence, the elements of matrix $M$ satisfy
\begin{equation}
\centering
M_{n-j,i}=\left\{
\begin{array}{cc} 
   \frac{C_{n-i}^{j}}{C_{n}^{j}},&i+j\leq n\\
   0, &i+j>n. 
\end{array}
\right.
\end{equation}
We have $M_{i,0}=1, \; 0 \leq M_{i,j}\leq 1, \; i,\;j=0,1,\dots,n$.
\end{proof}
\begin{theorem}\label{thm1}
For arbitrary piece of the trajectory, 
$\forall \underline {p_0^k},\overline {p_0^k},\underline {p_1^k},\overline {p_1^k}\in \mathbb{R}$, there exists control points $c_i^k\in \Omega_2^{k}=\{c^k|\underline {p_0^k}+h_k\underline {p_1^k}M_{i,1} \leq c_i^{k} \leq \overline {p_0^k}+h_k\overline {p_1^k}M_{i,1},i=0,1,\dots,n\}$, 
s.t. $s(t)$ is safe and $\mathcal{S}^{cor}$ is a trapezoidal corridor.
\end{theorem}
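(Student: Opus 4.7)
The plan is to lift Lemma~\ref{lem1} from a scalar identity about the columns of $M$ to a coefficient-wise sandwich on the B\'ezier curve $B_k(\tau)$, using the fact that the Bernstein basis is nonnegative on $[0,1]$ and forms a partition of unity. The affine shape of the trapezoid's boundaries is what makes the argument succeed using only columns $j=0$ and $j=1$ of $M$.

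First I would extract from Lemma~\ref{lem1} two identities on $\tau\in[0,1]$: the partition of unity $1=\sum_{i=0}^n b_n^i(\tau)$ (column $0$, since $M_{i,0}=1$) and the Bernstein expansion of the monomial $\tau=\sum_{i=0}^n M_{i,1}\, b_n^i(\tau)$ (column $1$). A linear combination of these yields Bernstein representations of the trapezoid's lower and upper boundaries. For the lower boundary on the $k$-th piece,
\begin{equation*}
\underline{p_0^k}+h_k\underline{p_1^k}\,\tau=\sum_{i=0}^n\bigl(\underline{p_0^k}+h_k\underline{p_1^k}M_{i,1}\bigr) b_n^i(\tau),
\end{equation*}
and analogously for $\overline{p_0^k}+h_k\overline{p_1^k}\,\tau$. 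Note that the Bernstein coefficients appearing on the right-hand side are exactly the scalar bounds appearing in the definition of $\Omega_2^k$.

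Next, for any $c_i^k\in\Omega_2^k$ I would multiply each of the $(n{+}1)$ defining inequalities by the nonnegative weight $b_n^i(\tau)\ge 0$ and sum over $i$. Using the expansions above to collapse the outer sums, this gives, for every $\tau\in[0,1]$,
\begin{equation*}
\underline{p_0^k}+h_k\underline{p_1^k}\,\tau\;\le\;B_k(\tau)\;\le\;\overline{p_0^k}+h_k\overline{p_1^k}\,\tau.
\end{equation*}
Substituting $\tau=(t-T_k)/h_k$ recovers the safety condition~\eqref{safety1} on $[T_k,T_{k+1}]$. Repeating the argument on every piece $k=0,\dots,m$ and taking the union produces a set $\mathcal{S}^{cor}=\bigcup_k\mathcal{S}_k\subseteq\mathcal{S}$ whose upper and lower envelopes are piecewise affine in $t$, i.e.\ a trapezoidal corridor in the sense of the definition above.

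The only non-mechanical step is the Bernstein expansion of the monomial $\tau$, which is a direct corollary of Lemma~\ref{lem1} and is the main substantive ingredient; the rest is coefficient-wise comparison made legitimate by $b_n^i(\tau)\ge 0$, together with careful bookkeeping of the $h_k$ rescaling so that the $h_k$ factor inside $\Omega_2^k$ cancels exactly against the $h_k$ factor appearing when $\tau$ is written back as $(t-T_k)/h_k$.
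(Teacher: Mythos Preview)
Your argument for the safety implication is essentially the paper's: both extract from Lemma~\ref{lem1} the identities $\sum_i b_n^i(\tau)=1$ and $\sum_i M_{i,1}\,b_n^i(\tau)=\tau$, write the affine boundaries in Bernstein form, and then use nonnegativity of $b_n^i(\tau)$ to push the coefficient-wise inequalities defining $\Omega_2^k$ through to $B_k(\tau)$. That part is correct and matches the paper.

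What you omit is the existence claim. The theorem asserts that control points in $\Omega_2^k$ actually exist, and the paper proves this explicitly: since $0\le M_{i,1}\le 1$ by Lemma~\ref{lem1}, the point $t=T_k+h_kM_{i,1}$ lies in $[T_k,T_{k+1}]$, and substituting it into the standing S-T inequality~\eqref{safety_bound} yields $\underline{p_0^k}+h_k\underline{p_1^k}M_{i,1}<\overline{p_0^k}+h_k\overline{p_1^k}M_{i,1}$ for every $i$, so each coordinate interval in $\Omega_2^k$ is nonempty. This step is not a formality; it is precisely what distinguishes the trapezoidal construction from the rectangular one in Proposition~\ref{propo2}, where the analogous box can collapse and the planner becomes infeasible. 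Your proposal only establishes the conditional ``if $c_i^k\in\Omega_2^k$ then $s(t)$ is safe,'' which is the second half of the theorem; you should add the feasibility argument up front.
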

\begin{proof}
On the S-T graph for $t\in[T_k,T_{k+1}]$, it holds that 
\begin{equation}\label{safety_bound}
\underline {p_0^k}+h_k\underline {p_1^k}\frac{t-T_k}{h_k}<
\overline {p_0^k}+h_k\overline {p_1^k}\frac{t-T_k}{h_k}. 
\end{equation}
According to lemma \ref{lem1}, $M_{i,1}$ satisfies $0\leq M_{i,1}\leq 1$. 
Thus, we have $T_k\leq T_k+h_kM_{i,1}\leq T_{k+1}$ and let $t=T_k+h_kM_{i,1}$. Then we obtain 
$$\underline {p_0^k}+h_k\underline {p_1^k}M_{i,1} < \overline {p_0^k}+h_k\overline {p_1^k}M_{i,1},$$ and $\exists c_i^k,s.t.\ \underline {p_0^k}+h_k\underline {p_1^k}M_{i,1} \leq c_i^{k} \leq \overline {p_0^k}+h_k\overline {p_1^k}M_{i,1}$. 
As for the safety of $s(t)$, $\forall t_0\in [T_k,T_{k+1}]$,
\begin{align*}
s(t_0)&\leq \sum_{i=0}^{n}(\overline {p^k_0}+h_k\overline {p^k_1}M_{i,1})b_n^{i}
\left(\frac{t-T_k}{h_k}\right)\\
&\leq \overline {p^k_0}\sum_{i=0}^{n}b_n^{i}\left(\frac{t-T_k}{h_k}\right)
+h_k\overline {p_1^k}\sum_{i=0}^{n}M_{i,1}b_n^{i}\left(\frac{t-T_k}{h_k}\right)\\
& = \overline {p_0^k} + h_k\overline {p_1}\frac{t-T_k}{h_k}.
\end{align*}
Similarly, we can achieve $s(t_0)\geq \underline {p_0^k} + h_k\underline {p^k_1}
\frac{t-T_k}{h_k}$. Therefore, we obtain $s(t_0)\in\mathcal{S}_k^{tra}=\mathcal{S}_k$ 
and $s(t)\in\mathcal{S}^{tra}=\mathcal{S}$. Hence $s(t)$ is safe and the corridor is a trapezoid.
\end{proof}
The proof of Theorem \ref{thm1} utilizes the idea of transforming terms in boundaries into the Bernstein basis. The process of proof also shows that $s(t)$ is exactly constrained in the safe set $\mathcal{S}$ according to the convex property of B\'{e}zier function, not its subset, e.g. $\mathcal{S}^{rec}$.

In Theorem \ref{thm1}, conditions on $c_i$ is $\underline {p_0^k}+h_k\underline {p_1^k}M_{i,1} \leq c_i^{k} \leq \overline {p_0^k}+h_k\overline {p_1^k}M_{i,1}$. Compared to safety enforcement in rectangular corridors in proposition \ref{propo2}, we have $\underline {p_0^k}+h_k\underline {p_1^k}M_{i,1}\leq \underline {p_0^k}+h_k\underline {p_1^k}$ and $\overline {p_0^k}+h_k\overline {p_1^k}M_{i,1} \geq \overline {p_0^k}$. The advantage of this is two folds: i) By the proof of $\underline {p_0^k}+h_k\underline {p_1^k}M_{i,1} < \overline {p_0^k}+h_k\overline {p_1^k}M_{i,1}$, the lower boundaries are smaller than the higher boundaries all the time. Therefore, the planner will not be insoluble and fail with safety enforcement. ii) The constraints are relaxed and the solutions tend to be better than the ones by rectangular corridors. 

\begin{remark}
Considering the corridors with $n$th-ordered polynomials as boundaries, if there exists control points 
$c_i^k \in \Omega_3^{k}=\{c^k|\sum_{j=0}^{n}h_k^{j}\underline {p_j^k}M_{i,j}\leq c^k \leq 
\sum_{j=0}^{n}h^j_k\overline {p_j^k}M_{i,j},i=0,1,\dots,n\}$, it follows that 
$\sum_{j=0}^{n}h^j_k\underline{p^k_j}{\left(\frac{t-T_k}{h_k}\right)}^{j}\leq s(t) \leq\sum_{j=0}^{n}h^j_k\overline{p^k_j}{\left(\frac{t-T_k}{h_k}\right)}^j$ and $s(t)$ is safe.
However, the lower bound may be larger than the upper bound in this inequality and there is no solution for the optimization problem.
\end{remark}

\subsection{Trajectory Optimization Formulation}
There are several objectives for the B\'{e}zier function to minimize and 
the cost function is designed as
\begin{small}
\begin{equation}
\begin{aligned}
&J = w_1\int_{0}^{T}\left(s(t)-s^{r}(t)\right)^2\mathrm{d}t+
w_2\int_{0}^{T}\left(\dot{s}(t)-v^{r}\right)^2\mathrm{d}t\\
&+w_3\int_{0}^{T}\ddot{s}(t)^2\mathrm{d}t+
w_4\int_{0}^{T}\dddot{s}(t)^2\mathrm{d}t+w_5
\left(s(T)-s^r(T)\right)^2,
\end{aligned}
\end{equation}
\end{small}where $s^{r}(t)$ is the reference stations by DP and $v_r$ is the cruise speed. The cruise speed varies with scenarios, but is considered as a constant in one planning period. The first term is to minimize the distance between the B\'{e}zier curve and the heuristic $s-t$ profile. The second one optimizes the bias between the actual speed and reference speed. This aims to let the vehicle keep a high velocity. The third and fourth terms are to smooth the heuristic $s-t$ curve by penalizing the acceleration and jerk respectively. Besides, the end station is expected to reach the certain value $s^r(T)$ by the last term. 

The constraints include boundary constraints, continuity constraints, safety constraints and physical constraints. \\
\noindent i) Boundary Constraints:
The piecewise curve starts at fixed position, speed and acceleration and it follows that
\begin{equation}
\begin{aligned}
c_{i}^{0,l}h_k^{(1-l)}=\left.\frac{\mathrm{d}^l s(t)}{\mathrm{d}t^l}\right|_{t=0},
\ \; l=0,1,2,
\end{aligned}
\end{equation}
where $c_{i}^{k,l}$ is the control point for the $l$-th order derivative of the $k$-th B\'{e}zier curve. \\
ii) Continuity Constraints:
The piecewise curve is continuous at the connecting time points in terms of position, speed and acceleration. 
According to these conditions, we come to 
\begin{equation}
\begin{aligned}
c_n^{k,l}h_k^{(1-l)}=c_{0}^{k+1,l}h_{k+1}^{(1-l)}, \;l=0,1,2, \; k=0,1,\dots,m-1.
\end{aligned}
\end{equation}
iii) Safety Constraints:
With trapezoidal corridors as discussed in the last part, safety constraints can be represented as
\begin{equation}
\begin{aligned}
\underline {p_0^k}+h_k\underline {p_1^k}M_{i,1} \leq c_{i}^{k,0} \leq 
\overline {p_0^k}+h_k\overline {p_1^k}M_{i,1}, \; k=0,1,\dots,m. 
\end{aligned}
\end{equation}
iv) Physical Constraints:
These constraints consider real physical conditions of the vehicle and limit the velocity, acceleration and jerk. We first use the hodograph property (iv) to calculate velocity, acceleration and jerk as B\'{e}zier polynomials. Then the constraints can be given by 
\begin{equation}
\begin{aligned}
\underline {\beta^{k,1}}& \leq c_{i}^{k,l} \leq 
\overline {\beta^{k,1}},\\
\underline {\beta^l}& \leq c_{i}^{k,l} \leq 
\overline {\beta^l},\ l=2,3,
\end{aligned}
\end{equation}
where $k=0,1,\dots,m$ and it follows that $c_{i}^{k,l+1}=(n-l)(c_{i+1}^{k,l}-c_{i}^{k,l})$. 
The upper bounds $\overline {\beta^{k,1}}$ are determined by speed limits on road and centripetal acceleration constraints. Let $a_{cm}$ be the maximum acceleration permitted and $\kappa_{k}$ the maximum curvature of the path for $t\in[T_k,T_{k+1}]$ (see \cite{zhang2019optimal} for details). The lateral acceleration constraints are given by 
\begin{equation}
\begin{aligned}
 c_{i}^{k,l} \leq 
\overline {\beta^{k,1}}=
\sqrt{\frac{a_{cm}}{\kappa_{k}}}.
\end{aligned}
\end{equation}
The bounds on longitudinal accelerations and jerks are constant for different pieces of speed profiles.

Then, the trajectory optimization process can be formulated as a quadratic programming (QP) problem as 
\begin{equation}\label{eq11}
\begin{aligned}
\bm{\mathrm{P}}:\qquad\min\limits _{\bm{\mathrm{c}}}\;\; & \bm{\mathrm{c}}^{T}\bm{\mathrm{Q_{c}c}}+\bm{\mathrm{q_{c}}}^{T}\bm{\mathrm{c}}+\mathrm{const}\\
\text{s.t.}\;\; & \bm{\mathrm{A}}_{eq}\bm{\mathrm{c}}=\bm{\mathrm{b}}_{eq}\\
 & \bm{\mathrm{A}}_{ie}\bm{\mathrm{c}}\leq\bm{\mathrm{b}}_{ie}.
\end{aligned}
\end{equation}
We refer readers to the appendix for the detailed formulation process. This problem can be solved by solvers like OSQP. 
\section{EXPERIMENTS AND RESULTS ANALYSIS}
The planning approach proposed is implemented in C++11 with the OSQP solver \cite{stellato2020osqp}. All experiments are carried out on a dual-core 2.90GHz Intel i5-4210H processor. The planning horizon is 7s. In the dynamic programming procedure, the resolution of time is 1s. Hence, in each corridor, the heuristic station $s^r(t)$ is a first-order polynomial. 
\subsection{Optimality and Low Failure Rate Verification}
We conduct experiments on the S-T graph first to verify the optimality and low failure rate of the proposed approach compared to B\'{e}zier polynomial with rectangular corridors. Consider the scenario in Fig.\ref{fig1.a}. We project different stations of the vehicles onto the S-T graph. Suppose the ego vehicle is currently traveling at a speed of $v(0)$ $=$ $10.0 m/s$ and an acceleration of $a(0) = 0$. 
The parameters are set to be $w_1 = 0.1$, $w_2 = 0.1$, $w_3 = 10.0$, $w_4 = 5.0$ 
and $w_5 = 3.0$. 

As shown in Fig.\ref{fig3.a}, B\'{e}zier curves are calculated within rectangular and trapezoidal corridors with same parameters. Fig.\ref{fig3.c} illustrates corresponding speed profiles and \ref{fig3.d} shows acceleration profiles. We use longitudinal accelerations to quantify the comfort. As shown in the figure, the maximum acceleration of the curve generated by trapezoidal corridors is smaller than that by rectangular corridors. 
Table \ref{table2} gives maximal accelerations and average accelerations 
$\sqrt{\frac{1}{T}\int_{0}^{T} (\ddot{s}(t))^2\mathrm{d}t}$ in planning horizon $[0,T]$ and $T=7s$. This proves that TC has better comfort and smoothness performance than RC.

In the case shown in Fig.\ref{fig3.b}, the TC method works and the RC approach does not find a feasible solution. This is because the solution space of TC is larger than that of RC. As a result, the TC method has a lower failure rate in complex situations. 
\begin{figure}[t]
\begin{center}
\subfigure[B\'{e}zier curves with obstacles.]{\label{fig3.a}
\includegraphics[width=4cm,height=3cm]{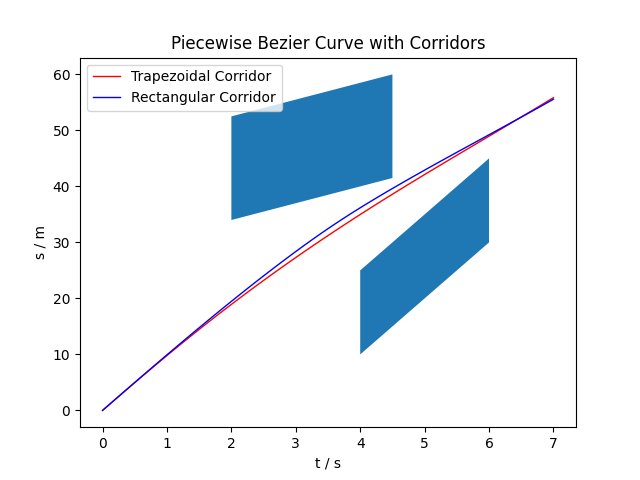}}
\subfigure[B\'{e}zier curves with obstacles.]{\label{fig3.b}
\includegraphics[width=4cm,height=3cm]{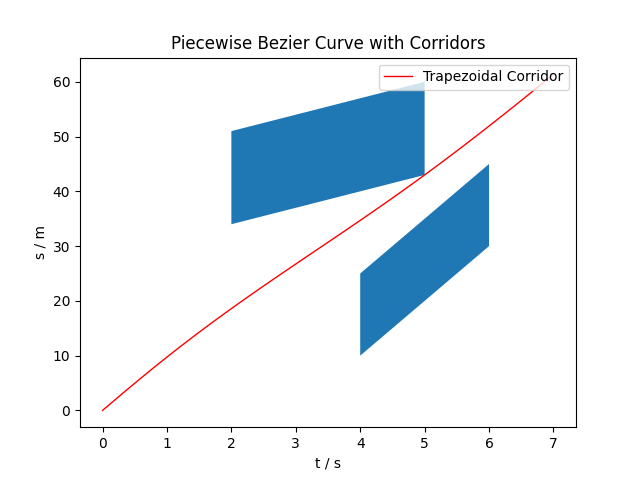}}
\subfigure[Speed profiles.]{\label{fig3.c}
\includegraphics[width=4cm,height=3cm]{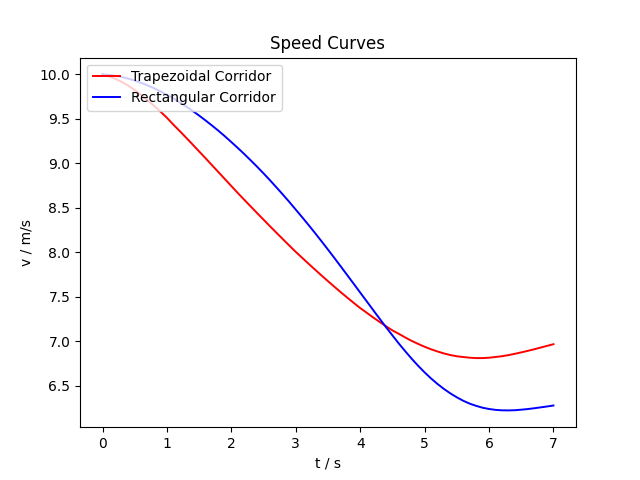}}
\subfigure[Acceleration profiles.]{\label{fig3.d}
\includegraphics[width=4cm,height=3cm]{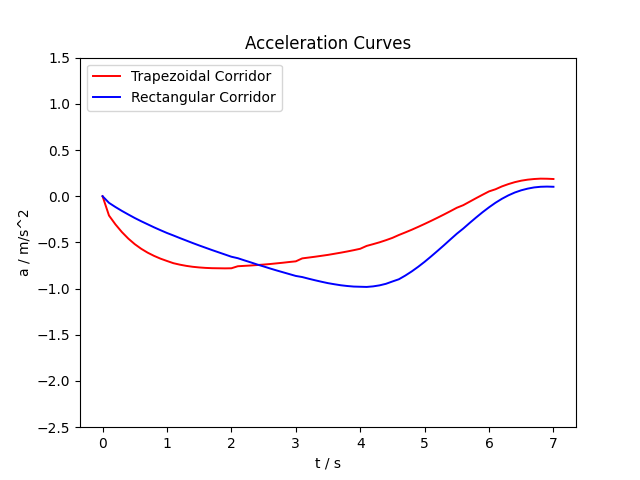}}
\caption{B\'{e}zier curves within trapezoidal and rectangular corridors.}
\label{fig3}
\end{center}
\vspace*{-0.8cm}
\end{figure}
\begin{table}[!htbp]
\vspace{-0.6cm}
\centering
\setlength{\abovecaptionskip}{6pt}%
\setlength{\belowcaptionskip}{0pt}%
\caption{Comfort of planners by different corridors.}
\setlength{\tabcolsep}{1.4mm}
\vspace{-0.21cm}
{
\begin{tabular}{c|c|c|c|c}
\label{table2}
  &  \multicolumn{2}{c|}{1} & \multicolumn{2}{c}{2} \\
\hline
  & Max. Acc. & Ave. Acc. & Max. Acc. & Ave. Acc. \\
\hline
 RC & 0.95 & 0.62 & - & -\\
\hline
 TC & \textbf{0.78} & \textbf{0.54} & \textbf{0.96}  & \textbf{0.59} \\
\hline
\end{tabular}}
\vspace{-0.7cm}
\end{table}
\subsection{Efficiency Verification}
We test our speed planning method in ROS and compare it with the start-of-art approaches to illustrate its efficiency. The perception and prediction modules are omitted by the simulation scenario and the localization of ego vehicle are obtained by the simulator. The low-level controller is implemented by Pure Pursuit algorithm. The motion planning part includes path planner implemented by the same method as \cite{zhou2020autonomous} and speed planners implemented by different approaches for tests. 

In Fig.\ref{fig4.a}, the ego vehicle (blue) begins to move forward. The global path is planned and shown as the long green line, covered by a red line representing local path planning for obstacle avoidance. The green line is also known as the guided line to generate the Frenet frame. Fig.\ref{fig4.b} - Fig.\ref{fig4.d} shows that the vehicle perceives the passing vehicle (orange) at the intersection and slows down to avoid the obstacle successfully. Fig.\ref{fig4.e} illustrates that the car passes the turn. Fig.\ref{fig4.f} shows that the vehicle reaches the destination. 

We conduct the experiment twice to test the efficiencies of different speed planners and the Table \ref{table1} contrasts the results. Our method outperforms EM planner \cite{fan2018baidu} and Public Road (PR) Planner \cite{zhang2019optimal} with 5.70 ms and 5.42 ms. 
The standard deviations and worst time values are also better than others.

\begin{figure}[t]
\begin{center}
\subfigure[Beginning to move.]{\label{fig4.a}
\includegraphics[width=4cm,height=2.5cm]{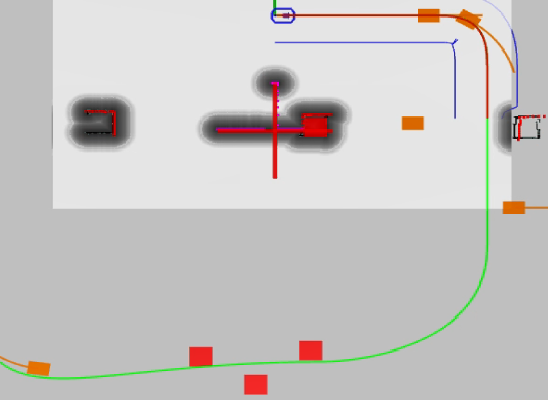}}
\subfigure[Avoiding the dynamic obstacle.]{\label{fig4.b}
\includegraphics[width=4cm,height=2.5cm]{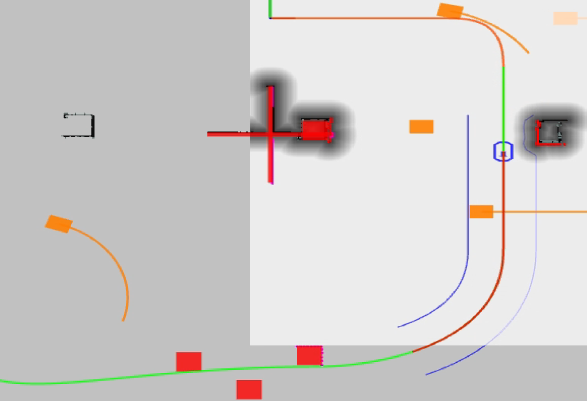}}
\subfigure[Avoiding the dynamic obstacle.]{\label{fig4.c}
\includegraphics[width=4cm,height=2.5cm]{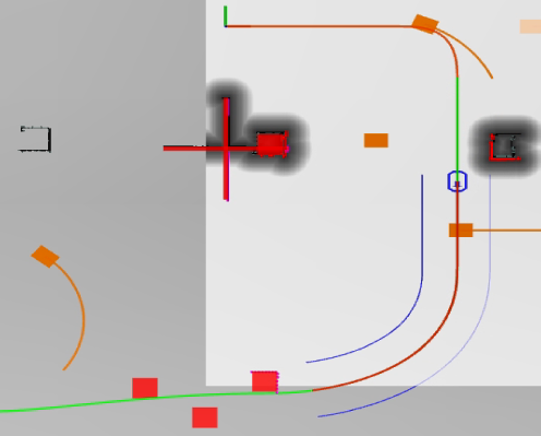}}
\subfigure[Avoiding the dynamic obstacle.]{\label{fig4.d}
\includegraphics[width=4cm,height=2.5cm]{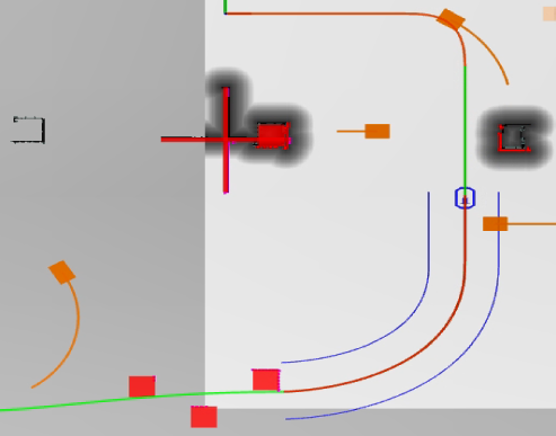}}
\subfigure[Passing the turn successfully.]{\label{fig4.e}
\includegraphics[width=4cm,height=2.5cm]{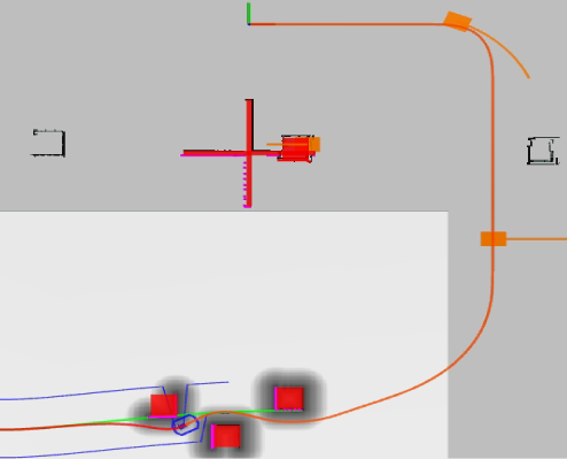}}
\subfigure[Reaching the destination.]{\label{fig4.f}
\includegraphics[width=4cm,height=2.5cm]{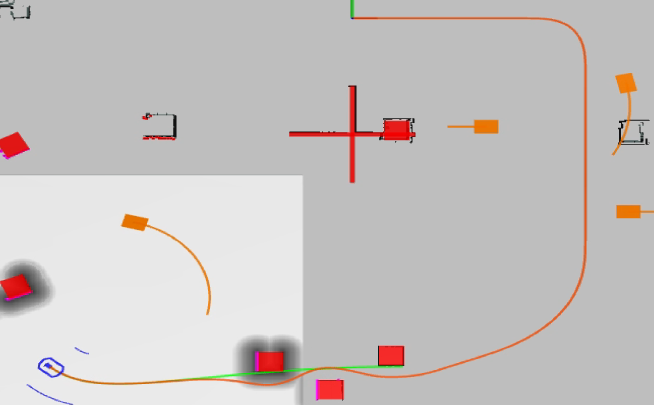}}
\caption{Autonomous navigation of the ego vehicle.}
\label{fig4}
\end{center}
\vspace*{-10pt}
\end{figure}

\begin{table}[!htbp]
\centering
\setlength{\abovecaptionskip}{6pt}%
\setlength{\belowcaptionskip}{0pt}%
\caption{Time consumptions of different velocity planners.}
\setlength{\tabcolsep}{1.4mm}
{
\begin{tabular}{c|c|c|c|c|c}
\label{table1}
  & Velocity Planner & Ave./ms & Std./ms & Worst/ms & Planning times \\
\hline
\multirow{3}*{1} & EM  & 16.81 & 7.77 & 39.57& 379\\
\cline{2-6}
 & PR &7.91&4.04 & 28.71& 429\\
\cline{2-6}
 & Ours & \textbf{5.70} & \textbf{2.72} & \textbf{16.46} & 403\\
\hline
\multirow{3}*{2} & EM & 15.52 & 6.71 & 38.37& 421\\
\cline{2-6}
 & PR &7.89 &3.69 & 35.47& 548\\
\cline{2-6}
 & Ours & \textbf{5.42} & \textbf{2.45} & \textbf{13.61} & 470\\
 \hline
\end{tabular}}
\vspace{-0.65cm}
\end{table}
\section{Conclusion}
In this paper, we investigate speed planning for autonomous vehicles. 
We use dynamic programming to search coarse stations first. Then an approach to generating trapezoidal corridors is presented. We also propose the sufficient conditions on control points of B\'{e}zier polynomials to guarantee the safety theoretically. Compared to previous safety enforcement in rectangular corridors, our method is proved to be more relaxed and solvable when incorporated into the B\'{e}zier polynomials optimization. After that, we formulate the velocity optimization as a QP problem. Simulations show that our approach is better than the safety enforcement in rectangular corridors in terms of optimality and low failure rates. The proposed way is also faster than the start-of-art methods, e.g. EM Planner and Public Road Planner.
\vspace{-0.3cm}
\section*{appendix}
\subsection{Proof of Proposition \ref{pro1}}
\begin{proof}
Since $c_i \leq \overline {p_0}$, it follows that 
$$B(t)\leq \overline {p_0}\sum_{i=0}^{n}b_{n}^{i}(t)=\overline {p_0}(t+1-t)^{n}=\overline {p_0}.$$
Similarly, we have $B(t)\geq \underline {p_0}$. Hence, it holds that $\underline {p_0} \leq B(t) \leq \overline {p_0},t\in[0,1]$.
\end{proof}
\subsection{QP Formulation}
This part illustrates how to formulate the B\'{e}zier polynomial optimization as a QP problem. First, we express the B\'{e}zier curve as a polynomial \vspace{-0.25cm}
\begin{footnotesize}\begin{equation}
\begin{aligned}
s_k(t)=&h_k\sum_{i=0}^{n}c_i^kb_n^{i}\left (\frac{t-T_k}{h_k}\right)\\
=&h_k\sum_{i=0}^{n}p_i^k\left (\frac{t-T_k}{h_k}\right)^{i}
=h_kf_k\left (\frac{t-T_k}{h_k}\right),
\end{aligned}
\vspace{-0.2cm}
\end{equation}\end{footnotesize}

\noindent where $f_k(t)=\sum_{i=0}^{n}p_i^kt^i$, \; $k=0,1,\dots,m$ is a polynomial curve.
Let $M\in\mathbb{R}^{(n+1)\times(n+1)}$ denote the transition matrix from the Bernstein basis $\{b_{n}^{0}(t),b_{n}^{1}(t),\dots,b_{n}^{n}(t)\}$ to the monomial basis $\{1,t,t^{2},\dots,t^{n}\}$. Then, we have 
$\bm{\mathrm{c^k}}=M\bm{\mathrm{p^k}}$
with $\bm{\mathrm{c^k}}=[c_0^k,\dots,c_n^k]^{T}$ and 
$\bm{\mathrm{p^k}}=[p_0^k,\dots,p_n^k]^{T}$. According to lemma \ref{lem1}, it holds that $|M|>0$ and $M$ is invertible. Hence, if the objective function can be written as
\vspace{-0.25cm}
\begin{footnotesize}
\begin{equation}\label{QP_FORM}
\begin{aligned}
J=\sum_{k=0}^{m}\left[(\bm{\mathrm{p^k}})^{T}Q^k{\bm{\mathrm{p^k}}}+\bm{\mathrm{q^k}}\bm{\mathrm{p^k}}\right]+\mathrm{const}\geq0,
\end{aligned}
\vspace{-0.25cm}
\end{equation}
\end{footnotesize}
where $Q^k$ is positive definite and known, then we have
\begin{footnotesize}
\begin{equation}
\begin{aligned}
 J=&\left[\begin{array}{c}
 \bm{\mathrm{c^0}}\\\vdots \\ \bm{\mathrm{c^m}}
 \end{array}\right]^T
 \left[\begin{array}{ccc}
 {(M^{-1})}^{T}Q^0M^{-1} & & \bm{\mathrm{0}}\\
   &  \ddots  &\\
  \bm{\mathrm{0}} &  & {(M^{-1})}^{T}Q^mM^{-1}
 \end{array}\right]
 \left[\begin{array}{c}
 \bm{\mathrm{c^0}}\\\vdots \\ \bm{\mathrm{c^m}}
 \end{array}\right]\\
&+\left[\begin{array}{c}
{\bm{\mathrm{q^0}}} \\ \vdots \\ {\bm{\mathrm{q^m}}}
\end{array}\right]^T
\left[\begin{array}{ccc}
 M^{-1} & & \bm{\mathrm{0}}\\
   &  \ddots  &\\
  \bm{\mathrm{0}} &  & M^{-1}
 \end{array}\right]
 \left[\begin{array}{c}
 \bm{\mathrm{c^0}}\\\vdots \\ \bm{\mathrm{c^m}}
 \end{array}\right]+\mathrm{const}\\
 =&\bm{\mathrm{c}}^{T}Q_c\bm{\mathrm{c}}+\bm{\mathrm{q_{c}}}^{T}\bm{\mathrm{c}}+\mathrm{const}
 \geq0.
\end{aligned}
\end{equation}
\end{footnotesize}$Q_c$ is also a positive-definite matrix. 
Since the constraints are all linear with $\bm{\mathrm{c}}$, the optimization problem is a QP problem. 

Next we will illustrate that equation \eqref{QP_FORM} holds and how to calculate $Q_k$ and $\bm{\mathrm{q^k}}$. We first calculate some terms to achieve the cost function $J$. To begin with, it holds that \vspace{-4pt}
\begin{footnotesize}
\begin{equation}
\begin{aligned}
&\int_{T_k}^{T_{k+1}}\left(\frac{\mathrm{d}^ls(\tau)}{\mathrm{d}\tau^l}\right)^2\mathrm{d}\tau
=\int_{0}^{h_k}\left(\frac{\mathrm{d}^ls(\tau+T_k)}{\mathrm{d}\tau^l}\right)^2\mathrm{d}\tau\\
&=\int_{0}^{h_k}\left(\frac{\mathrm{d}^ls(\tau+T_k)}{\mathrm{d}t^l}
\left(\frac{\mathrm{d}t}{\mathrm{d}\tau}\right)^l\right)^2\mathrm{d}\tau
=\frac{1}{h_k^{2l-3}}\int_{0}^{1}\left(\frac{\mathrm{d}^lf_k(t)}{\mathrm{d}t^l}\right)^2\mathrm{d}t.
\end{aligned}
\end{equation}\end{footnotesize} \vspace{-4pt}
As for $\int_{0}^{1}\left(\frac{\mathrm{d}^lf_k(t)}{\mathrm{d}t^l}\right)^2\mathrm{d}t$, it follows that 
\begin{footnotesize}
\begin{equation}\label{de_integral}
\begin{aligned}
&\int_{0}^{1}\left(\frac{\mathrm{d}^lf_k(t)}{\mathrm{d}t^l}\right)^2\mathrm{d}t
=\int_{0}^{1}\sum_{i\geq l,j\geq l}p_i^kp_j^kt^{i+j-2l}\mathrm{d}t\\
=&\sum_{i\geq l,j\geq l}\frac{i(i-1)\cdots(i-l)j(j-1)\cdots(j-l)}{i+j+1-2l}p_i^kp_j^k.
\end{aligned}
\end{equation}\end{footnotesize}We also have $\int_{0}^{1}tf_k(t)\mathrm{d}t=\sum_{i}\frac{1}{i+2}p_i^k$,$
\int_{0}^{1}f_k(t)\mathrm{d}t=\sum_{i}\frac{1}{i+1}p_i^k.$
Suppose $J=\sum_{i=1}^{5}w_iJ_i$, the terms of $J$ satify \vspace{-0.1cm}
\begin{footnotesize}
\begin{equation}\label{objective_function1}
\begin{aligned}
\quad J_1=&\sum_{k=0}^{m}\int_{T_k}^{T_{k+1}}\left(s_k(t)-a_k(t-T_k)-b_k\right)^2\mathrm{d}t\\
   =&\sum_{k=0}^{m}\int_{T_k}^{T_{k+1}}\left[s_k(t)^2
   -2\left(a_k(t-T_k)+b_k\right)s_k(t)\right]\mathrm{d}t
   +\mathrm{const}\\
   =&\sum_{k=0}^{m}h_k^3\int_{0}^{1}f_k(t)^2\mathrm{d}t-
   2h_k^3a_k\int_{0}^{1}tf_k(t)\mathrm{d}t-2h_k^2b_k\int_{0}^{1}f_k(t)\mathrm{d}t\\&+\mathrm{const}\\
\end{aligned}
\end{equation}\end{footnotesize}\vspace{-0.3cm}
\begin{footnotesize}
\begin{equation}\label{objective_function2}
\begin{aligned}
\quad J_2=&\sum_{k=0}^{m}\int_{T_k}^{T_{k+1}}{\dot{s_k}(t)}^2\mathrm{d}t
   -2v_r\int_{0}^{T}\dot{s}(t)\mathrm{d}t+\mathrm{const}\\
   =&\sum_{k=0}^{m}h_k\int_0^1\dot{f}_k(t)^2\mathrm{d}t-2v_rs(T)+\mathrm{const}
\end{aligned}
\end{equation}\end{footnotesize}
\begin{footnotesize}
\begin{equation}\label{objective_function3}
\begin{aligned}
\quad J_3=\sum_{k=0}^{m}\frac{1}{h_k}\int_0^1\ddot{f}_k(t)^2\mathrm{d}t,
\quad J_4=\sum_{k=0}^{m}\frac{1}{h^3_k}\int_0^1\dddot{f}_k(t)^2\mathrm{d}t
\end{aligned}
\end{equation}\end{footnotesize}
\begin{footnotesize}
\begin{equation}\label{objective_function5}\begin{aligned}
\quad J_5=&\left(s(T)-s^r(T)\right)^2=s(T)^2-2s^r(T)s(T)+\mathrm{const}.
\end{aligned}
\end{equation}\end{footnotesize}
Then we can come to the equation \eqref{QP_FORM} by replacing integral terms and using $J=\sum_{i=1}^{5}w_iJ_i$.
\vspace{-6pt}

\end{document}